
\documentclass{article}

\usepackage{microtype}
\usepackage{graphicx}
\usepackage{subfigure}
\usepackage{booktabs}

\usepackage{bbm}
\usepackage{subfigure}
\usepackage{multirow}
\usepackage{color,verbatim}
\usepackage[tableposition=top]{caption}
\usepackage{hyperref}
\usepackage{enumitem}
\usepackage{amssymb}
\usepackage{amsmath}

\usepackage{blindtext}

\usepackage{amsthm}

\newtheorem{theorem}{Theorem}[section]
\theoremstyle{definition}
\newtheorem{definition}{Definition}[section]

\usepackage{array}
\newcolumntype{L}[1]{>{\raggedright\let\newline\\\arraybackslash\hspace{0pt}}m{#1}}
\newcolumntype{C}[1]{>{\centering\let\newline\\\arraybackslash\hspace{0pt}}m{#1}}
\newcolumntype{R}[1]{>{\raggedleft\let\newline\\\arraybackslash\hspace{0pt}}m{#1}}

\def\method{KAGAN}

\usepackage{hyperref}



\usepackage[accepted]{icml2018}

\icmltitlerunning{Weakly-supervised Knowledge Graph Alignment with Adversarial Learning}

\begin{document}

\twocolumn[
\icmltitle{Weakly-supervised Knowledge Graph Alignment with Adversarial Learning}



\icmlsetsymbol{equal}{*}

\begin{icmlauthorlist}
\icmlauthor{Meng Qu}{1,2}
\icmlauthor{Jian Tang}{1,4,3}
\icmlauthor{Yoshua Bengio}{1,2,4}
\end{icmlauthorlist}

\icmlaffiliation{1}{Montr\'eal Institute for Learning Algorithms (MILA)}
\icmlaffiliation{2}{University of Montr\'eal}
\icmlaffiliation{3}{HEC Montr\'eal}
\icmlaffiliation{4}{Canadian Institute for Advanced Research (CIFAR)}

\icmlcorrespondingauthor{Meng Qu}{meng.qu@umontreal.ca}
\icmlcorrespondingauthor{Jian Tang}{jian.tang@hec.ca}

\icmlkeywords{Machine Learning, ICML}

\vskip 0.3in
]



\printAffiliationsAndNotice{}  

\begin{abstract}
This paper studies aligning knowledge graphs from different sources or languages. Most existing methods train supervised methods for the alignment, which usually require a large number of aligned knowledge triplets. However, such a large number of aligned knowledge triplets may not be available or are expensive to obtain in many domains. Therefore, in this paper we propose to study aligning knowledge graphs in fully-unsupervised or weakly-supervised fashion, i.e., without or with only a few aligned triplets.
We propose an unsupervised framework to align the entity and relation embddings of different knowledge graphs with an adversarial learning framework. Moreover, a regularization term which maximizes the mutual information between the embeddings of different knowledge graphs is used to mitigate the problem of mode collapse when learning the alignment functions. Such a framework can be further seamlessly integrated with existing supervised methods by utilizing a limited number of aligned triples as guidance. Experimental results on multiple datasets prove the effectiveness of our proposed approach in both the unsupervised and the weakly-supervised settings.
\end{abstract}

\section{Introduction}

Knowledge graphs represent a collection of knowledge facts and are quite popular in the real world. Each fact is represented as a triplet $(\mathbf{h},\mathbf{r},\mathbf{t})$, meaning that the head entity $\mathbf{h}$ has the relation $\mathbf{r}$ with the tail entity $\mathbf{t}$.
Examples of real-world knowledge graphs include instances which contain knowledge facts from general domain (e.g., Freebase~\footnote{~\url{https://developers.google.com/freebase/}}, DBPedia~\cite{auer2007dbpedia}, Yago~\cite{suchanek2007yago}, WordNet~\footnote{~\url{https://wordnet.princeton.edu/}}) or facts from specific domains such as biomedical ontology (UMLS~\footnote{~\url{https://www.nlm.nih.gov/research/umls/}}).
Knowledge graphs are critical to a variety of applications such as question answering~\cite{bordes2014question} and semantic search~\cite{guha2003semantic}. Research on knowledge graphs is attracting growing interest recently in both academia and industry communities. 

In practice, each knowledge graph is usually constructed from a single source or language, the coverage of which is limited. To enlarge the coverage and construct more unified knowledge graphs, a natural idea is to integrate multiple knowledge graphs from different sources or languages~\cite{arens1993retrieving}. However, different knowledge graphs use distinct symbol systems to represent entities and relations, which are not compatible. Therefore, it is critical to align the entities and relations across different knowledge graphs (a.k.a., knowledge graph alignment) before integrating them together.

Recently, multiple methods have been proposed to align the entities and relations from a source knowledge graph to a target knowledge graph (\citet{zhu2017iterative};~\citet{chen2017multilingual};~\citet{chen2017multi};~\citet{sun2018bootstrapping}). These methods first represent the entities and relations in low-dimensional spaces and then learn mapping functions to align the entities and relations from the source knowledge graph to the target one. Though these methods have been proven quite effective, they usually rely on a large number of aligned triplets for training supervised alignment models, and such aligned triplets may not be available or can be expensive to obtain in many domains. As a result, the performance of these methods will be comprised. Therefore, it would be desirable to design an unsupervised or weakly-supervised approach for knowledge graph alignment, which requires a few or even without aligned triplets.

In this paper, we propose an unsupervised approach to knowledge graph alignment with the adversarial training framework~\cite{goodfellow2014generative}. Our proposed approach first represents the entities and relations in low-dimensional spaces with existing knowledge graph embedding methods (e.g., TransE~\cite{bordes2013translating}) and then learn alignment functions, i.e.,  $p_e(\mathbf{e}_t|\mathbf{e}_s)$ and $p_r(\mathbf{r}_t|\mathbf{r}_s)$, to map the entities and relations ($\mathbf{e}_s$ and $\mathbf{r}_s$) from the source knowledge graph to those ($\mathbf{e}_t$ and $\mathbf{r}_t$) in the target graph. Intuitively, an ideal alignment function is able to map all the triples in the source graph to valid ones in the target graph. Therefore, we train a triplet discriminator to distinguish between the real triplets in the target knowledge graph and those aligned ones from the source graph. Such a discriminator measures the plausibility of a triplet in the target graph and provides a reward function for optimizing the alignment functions, which are optimized to fool the discriminator. The above process naturally forms an adversarial training procedure~\cite{goodfellow2014generative}. By alternatively optimizing the alignment functions and the discriminator, the whole process can constantly enhance the alignment functions.

Though the adversarial learning framework has been proved quite effective in many scenarios, one big problem is that it may suffer from the problem of mode collapse~\cite{salimans2016improved}. Specifically, in our case, it means that many entities in the source knowledge graph are aligned to only a few entities in the target knowledge graph. We propose to mitigate this problem by maximizing the mutual information between the entities in the source graph and those aligned entities, which can be effectively and effectively optimized with some recent techniques on mutual information neural estimation~\cite{belghazi2018mine}. We further prove that by maximizing the mutual information, different source-graph entities are encouraged to be aligned to different target-graph entities, which mitigates the mode collapse. 

The whole framework can also be seamlessly integrated with existing supervised methods, in which we can use a few aligned entities or relations as guidance, yielding a weakly-supervised approach. Our approach can be effectively optimized with stochastic gradient descent, where the gradient for the alignment functions is calculated by the REINFORCE algorithm~\cite{williams1992simple}. We conduct extensive experiments on several datasets. Experimental results prove the effectiveness of our proposed approach in both the weakly-supervised and unsupervised settings.

\section{Related Work}

Our work is related to knowledge graph embedding, which represents entities and relations as low-dimensional vectors (a.k.a., embedding). A variety of knowledge graph embedding approaches have been proposed (\citet{bordes2013translating};~\citet{wang2014knowledge};~\citet{yang2014embedding}), which can effectively preserve the semantic similarities of entities and relations into the learned embeddings. We treat these techniques as tools to learn entity and relation embeddings, which are used as features for knowledge graph alignment.

In literature, there are also some studies focusing on knowledge graph alignment. Most of them perform alignment by considering contextual features of entities and relations, such as their names~\cite{lacoste2013sigma} or text descriptions (\citet{chen2018co};~\citet{wang2012cross};~\citet{wang2013boosting}). However, such contextual features are not always available, and therefore these methods cannot generalize to most knowledge graphs. In this paper, we consider the most general case, in which only the triplets in knowledge graphs are used for alignment. The studies most related to ours are~\citet{zhu2017iterative},~\citet{chen2017multilingual} and~\citet{sun2018bootstrapping}. Similar to our approach, they treat the entity and relation embeddings as features, and jointly train an alignment model. However, they totally rely on the labeled data (e.g., aligned entities) to train the alignment model, whereas our approach incorporates additional signals by using adversarial training, and therefore achieves better results in the weakly-supervised and unsupervised settings.

More broadly, our work belongs to the family of domain alignment, which aims at mapping data from one domain to data in the other domain. With the success of generative adversarial networks~\cite{goodfellow2014generative}), many researchers have been bringing the idea to domain alignment, getting impressive results in many applications, such as image-to-image translation (\citet{zhu2017unpaired};~\citet{zhu2017toward}), word-to-word translation~\cite{conneau2017word} and text style transfer~\cite{shen2017style}. These studies typically train a domain discriminator to distinguish between data points from different domains, and then the alignment function is optimized by fooling the discriminator. Our approach shares similar idea, but is designed with some specific intuitions in knowledge graphs.

Finally, our work is also related to recent studies on neural mutual information estimation~\cite{belghazi2018mine}, which aims at estimating the mutual information of two distributions by using neural networks. Such a technique has been utilized in many applications, including image classification~\cite{hjelm2018learning} and unsupervised node representation learning~\cite{velivckovic2018deep}. All these studies use the technique to improve representation learning (e.g., image representation, node representation). By contrast, our approach uses the technique to avoid mode collapse in adversarial learning.
\section{Problem Definition}

\begin{definition}
\label{def::graph}
\textsc{(Knowledge Graph.)}
\textsl{A \textbf{knowledge graph} is denoted as $\mathbf{G}=(\mathbf{E},\mathbf{R},\mathbf{X})$, where $\mathbf{E}$ is a set of entities, $\mathbf{R}$ is a set of relations and $\mathbf{X}$ is a set of triplets. Each triplet $\mathbf{x}=(\mathbf{h}, \mathbf{r}, \mathbf{t})$ consists of a head entity $\mathbf{h}$, a relation $\mathbf{r}$ and a tail entity $\mathbf{t}$, meaning $\mathbf{h}$ has relation $\mathbf{r}$ with $\mathbf{t}$.}
\end{definition}

In practice, the coverage of each individual knowledge graph is usually limited, since it is typically constructed from a single source or language. To construct knowledge graphs with broader coverage, a straightforward way is to integrate multiple knowledge graphs from different sources or languages. However, each knowledge graph uses a unique symbol system to represent entities and relations, which is not compatible with other knowledge graphs. Therefore, a prerequisite for knowledge graph integration is to align entities and relations across different knowledge graphs (a.k.a., knowledge graph alignment). In this paper, we study how to align entities and relations from a source knowledge graph to those in a target knowledge graph, and the problem is formally defined below:

\begin{definition}
\label{def::problem}
\textsc{(Knowledge Graph Alignment.)}
\textsl{Given a source knowledge graph $\mathbf{G}_s=(\mathbf{E}_s,\mathbf{R}_s,\mathbf{X}_s)$ and a target knowledge graph $\mathbf{G}_t=(\mathbf{E}_t,\mathbf{R}_t,\mathbf{X}_t)$, the problem aims at learning an entity alignment function $p_e$ and a relation alignment function $p_r$. Given an entity $\mathbf{e}_s$ in the source knowledge graph and an entity $\mathbf{e}_t$ in the target knowledge graph, $p_e(\mathbf{e}_t|\mathbf{e}_s)$ gives the probability that $\mathbf{e}_s$ aligns to $\mathbf{e}_t$. Similarly, for a source relation $\mathbf{r}_s$ and a target relation $\mathbf{r}_t$, $p_r(\mathbf{r}_t|\mathbf{r}_s)$ gives the probability that $\mathbf{r}_s$ aligns to $\mathbf{r}_t$.}
\end{definition}
\section{Model}

\begin{figure}
	\centering
	\includegraphics[width=0.45\textwidth]{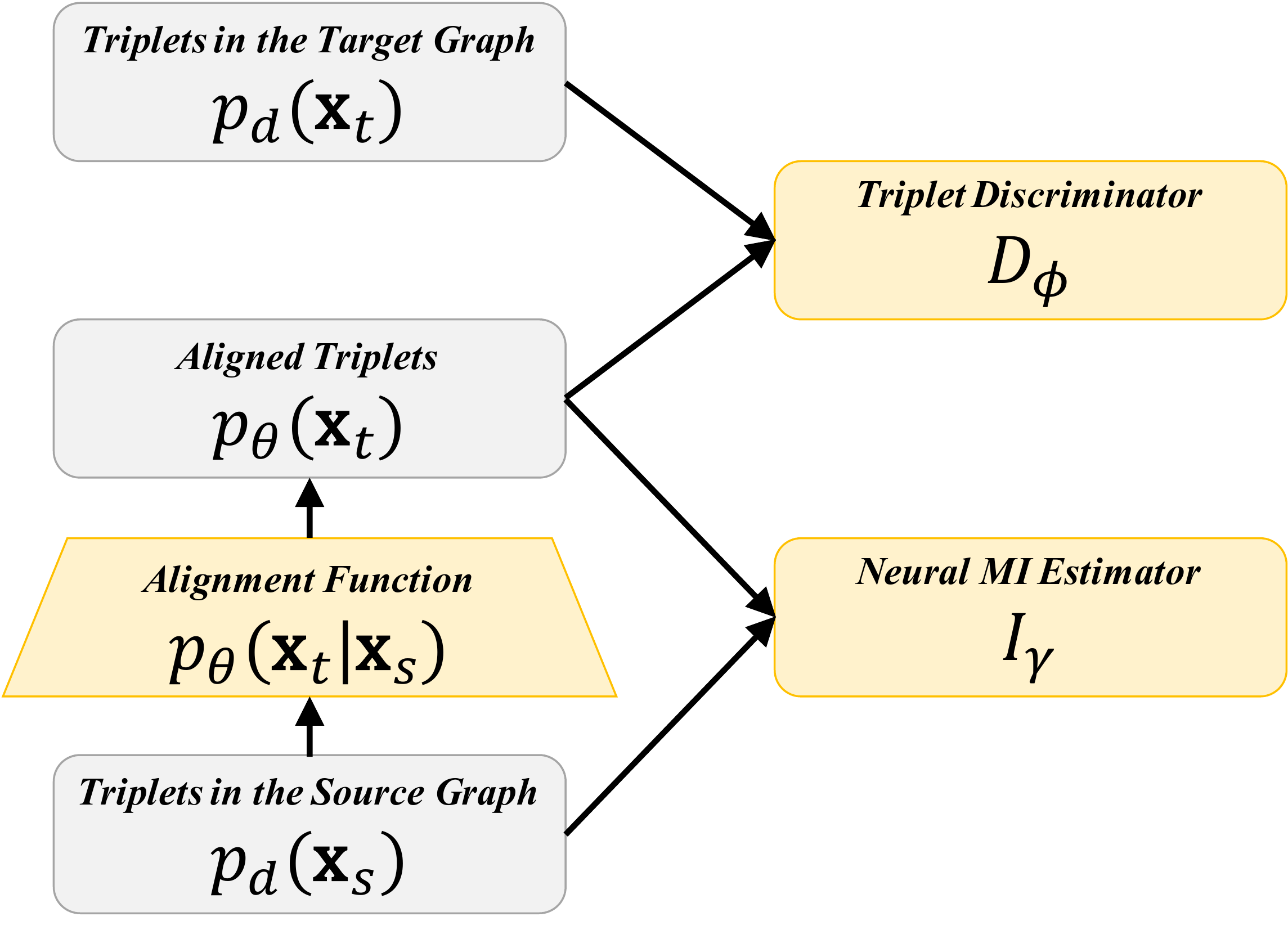}
	\caption{Framework overview. For the triplets in the source graph, our alignment function can align them to the target graph, yielding a set of aligned triplets. Then a triplet discriminator is employed to minimize the difference between the aligned triplets and the real triplets in the target knowledge graph. Meanwhile, a neural mutual information estimator is leveraged to help maximize the mutual information between the aligned entities and the real entities in the source knowledge graph, and thus mitigates mode collapse.
	}
	\label{fig::framework}
\end{figure}

In this paper we propose an unsupervised approach to learning the alignment functions, i.e.,  $p_e(\mathbf{e}_t|\mathbf{e}_s)$ and $p_r(\mathbf{r}_t|\mathbf{r}_s)$, for knowledge graph alignment. To learn them without supervision, we notice that we can align each source-graph triplet with a target-graph triplet by aligning the head/tail entities and relation respectively. For an ideal alignment model, all the aligned triplets should be valid ones (i.e., triplets expressing true facts). Therefore, we can improve the alignment functions by raising the plausibility of the aligned triplets. With the intuition, our approach trains a triplet discriminator to distinguish between valid and invalid triplets. Then we build a reward function from the discriminator to facilitate the alignment functions.

However, the adversarial training framework may cause the problem of mode collapse, i.e., many entities in the source graph are aligned to only a few entities in the target graph. We avoid the problem by maximizing the mutual information between the source-graph and the aligned entities, which can effectively enforce different source-graph entities to be aligned to different target-graph entities.

The above strategies yield an unsupervised approach. However, in many cases, the structures of the source and target knowledge graphs (e.g., entity and triplet distributions) can be very different, making our unsupervised approach unable to perform effective alignment. In such cases, we can integrate our approach with existing supervised methods, and use a few labeled data as guidance, yielding a weakly-supervised approach.

\subsection{Formulation of the Alignment Functions}

In this section, we introduce how we formulate the alignment functions, i.e., $p_e(\mathbf{e}_t|\mathbf{e}_s)$ and $p_r(\mathbf{r}_t|\mathbf{r}_s)$.

To build the alignment functions, our approach first pre-trains the entity and relation embeddings with existing knowledge graph embedding techniques (\citet{bordes2013translating};~\citet{wang2014knowledge};~\citet{yang2014embedding}), where the embeddings are denoted as $\{ \mathbf{v}_{\mathbf{e}_s} \}_{\mathbf{e}_s \in \mathbf{E}_s}$, $\{ \mathbf{v}_{\mathbf{e}_t} \}_{\mathbf{e}_t \in \mathbf{E}_t}$ and $\{ \mathbf{v}_{\mathbf{r}_s} \}_{\mathbf{r}_s \in \mathbf{R}_s}$, $\{ \mathbf{v}_{\mathbf{r}_t} \}_{\mathbf{r}_t \in \mathbf{R}_t}$. In practice, our approach is flexible with any knowledge graph embedding algorithms, and we analyze some of them in Sec.~\ref{sec::embedding}.

The learned embeddings preserve the semantic correlations of entities and relations, thus we treat them as features and build our alignment functions on top of them. Specifically, we define the probability that a source entity $\mathbf{e}_s$ or relation $\mathbf{r}_s$ aligns to a target entity $\mathbf{e}_t$ or relation $\mathbf{r}_t$ as follows:
\begin{equation}\nonumber
\label{eqn::align-e}
	p_\theta(\mathbf{e}_t|\mathbf{e}_s) \propto \exp (-\eta||\theta_e \mathbf{v}_{\mathbf{e}_s}-\mathbf{v}_{\mathbf{e}_t}||_2^2)
\end{equation}
\begin{equation}\nonumber
\label{eqn::align-r}
	p_\theta(\mathbf{r}_t|\mathbf{r}_s) \propto \exp (-\eta||\theta_r \mathbf{v}_{\mathbf{r}_s}-\mathbf{v}_{\mathbf{r}_t}||_2^2)
\end{equation}
Here, $\eta$ is a temperature parameter, $\theta_e$ and $\theta_r$ are linear projection matrices, which map an entity/relation embedding in the source knowledge graph (e.g., $\mathbf{v}_{\mathbf{e}_s}$) to one in the target graph (e.g., $\theta_e \mathbf{v}_{\mathbf{e}_s}$), so that we can perform alignment by calculating the Euclidean distance between those embeddings (e.g., $\mathbf{v}_{\mathbf{e}_t}$ and $\theta_e \mathbf{v}_{\mathbf{e}_s}$).

With the definition of entity and relation alignment functions, we can further align a source-graph triplet to a target-graph triplet by aligning the head/tail entities and the relation respectively. Based on that, the probability of aligning a source-graph triplet $\mathbf{x}_s=(\mathbf{h}_s, \mathbf{r}_s, \mathbf{t}_s)$ to a target-graph triplet $\mathbf{x}_t=(\mathbf{h}_t, \mathbf{r}_t, \mathbf{t}_t)$ is given as follows:
\begin{equation}\nonumber
\label{eqn::align-t}
	p_\theta(\mathbf{x}_t|\mathbf{x}_s)=p_\theta(\mathbf{h}_t|\mathbf{h}_s) p_\theta(\mathbf{r}_t|\mathbf{r}_s) p_\theta(\mathbf{t}_t|\mathbf{t}_s)
\end{equation}
Basically, we align the head/tail entities and the relation independently, and use the product of those probabilities to define the triplet alignment function.

By applying the triplet alignment function to all the triplets in the source graph, we obtain a distribution of the aligned triplet, which is given below:
\begin{equation}\nonumber
\label{eqn::prob-t-theta}
	p_\theta(\mathbf{x}_t)=\sum_{\mathbf{x}_s} p_d(\mathbf{x}_s)p_\theta(\mathbf{x}_t|\mathbf{x}_s)= \mathbb{E}_{p_d(\mathbf{x}_s)}[p_\theta(\mathbf{x}_t|\mathbf{x}_s)]
\end{equation}
Here $p_d(\mathbf{x}_s)$ is the data distribution of the triplets in the source graph.

\subsection{The Adversarial Training Framework}

With the above formulation, we have obtained $p_\theta(\mathbf{x}_t)$, which is the distribution of the triplets aligned from the source graph. Intuitively, we expect every triplet sampled from the distribution to be valid ones. For this purpose, we introduce a discriminator to discriminate between valid and invalid triplets. Such a discriminator essentially estimates the plausibility of a triplet, from which we can build a reward function to guide the alignment functions.

Formally, given a triplet $\mathbf{x}_t=(\mathbf{h}_t, \mathbf{r}_t, \mathbf{t}_t)$ in the domain of the target graph, the discriminator $D_\phi$ is defined below:
\begin{equation}\nonumber
\begin{aligned}
\label{eqn::align}
	D_\phi(\mathbf{x}_t)=\sigma(f_\phi(\mathbf{v}_{\mathbf{h}_t})+f_\phi(\mathbf{v}_{\mathbf{t}_t})+g_\phi(\mathbf{v}_{\mathbf{h}_t},\mathbf{v}_{\mathbf{r}_t},\mathbf{v}_{\mathbf{t}_t}))
\end{aligned}
\end{equation}
Here, $\sigma$ is the sigmoid function. $f_\phi$ and $g_\phi$ are potential functions parameterized by multi-layer neural networks. The potential functions take the entity and relation embedding as input, and output a pair-wise and a triplet-wise potential scores to calculate $D_\phi(\mathbf{x}_t)$, which measures the probability that $\mathbf{x}_t$ is a valid triplet.

We train the discriminator $D_\phi$ by using the following loss function as in~\citet{goodfellow2014generative}:
\begin{equation}
\begin{aligned}
\label{eqn::loss-d}
	L_\phi = -\mathbb{E}_{p_d(\mathbf{x}_t)}[\log D_\phi(\mathbf{x}_t)] - \mathbb{E}_{p_\theta(\mathbf{x}_t)}[\log (1-D_\phi(\mathbf{x}_t))]
\end{aligned}
\end{equation}
Here, $p_d(\mathbf{x}_t)$ is the distribution of the real triplet in the target knowledge graph, and $p_\theta(\mathbf{x}_t)$ is the distribution of triplets generated by our alignment functions. Basically, the real triplets in the target knowledge graph are treated as positive examples, and those generated by our aligned functions serve as negative examples.

Based on the discriminator, we can construct a scalar-to-scalar reward function $R$ to measure the plausibility of a triplet. Then the alignment functions can be trained by maximizing the reward, and the loss function is given below:
\begin{equation}
\begin{aligned}
\label{eqn::obj-reward}
	L_\theta = - \mathbb{E}_{p_\theta(\mathbf{x}_t)}[R(D_\phi(\mathbf{x}_t))]
\end{aligned}
\end{equation}
There are several ways to define the reward function $R$, which essentially yields different adversarial training frameworks. For example,~\citet{goodfellow2014generative} and~\citet{ho2016generative} treat $R(x)=\log x$ as the reward function.~\citet{finn2016connection} uses $R(x)=\log \frac{x}{1 - x}$.~\citet{che2017maximum} considers $R(x)=\frac{x}{1 - x}$. Besides, we may also leverage $R(x)=x$, which is the first-order Taylor's approximation of $-\log (1 - x)$ at $x=1$. All different reward functions essentially seek to minimize certain divergences between the data distribution $p_d(\mathbf{x}_t)$ and the model distribution $p_\theta(\mathbf{x}_t)$, and therefore they yield the same optimal solution (i.e., $p_\theta(\mathbf{x}_t)=p_d(\mathbf{x}_t)$). In practice, these reward functions may have different variance, and we empirically compare them in the experiments (Table~\ref{tab::reward}).

During optimization, the derivative with respect to the alignment functions cannot be calculated directly, as the triplets sampled from the alignment functions are discrete. Therefore, we leverage the REINFORCE algorithm~\cite{williams1992simple}, which calculates the gradient as follows:
\begin{equation}
\begin{aligned}
\label{eqn::grad-theta}
	\nabla_\theta L_\theta = - \mathbb{E}_{p_\theta(\mathbf{x}_t)}[R(D_\phi(\mathbf{x}_t)) \nabla_\theta \log p_\theta(\mathbf{x}_t)]
\end{aligned}
\end{equation}
During training, we will alternate between optimizing the discriminator and optimizing the alignment functions, so that the discriminator can consistently provide effective supervision to benefit the alignment functions.

\subsection{Dealing with Mode Collapse}

Although the above framework provides an effective way to learn alignment functions in an unsupervised manner, the training procedure may suffer from the problem of mode collapse. More specifically, the entities in the source graph may be aligned to only a few entities in the target graph.

To avoid the problem, a natural solution could be maximizing the mean KL divergence between the alignment distributions of two random source-graph entities $\mathbb{E}_{\mathbf{e}_{s,u},\mathbf{e}_{s,v}\sim p_d(\mathbf{e}_s) }[\text{KL}(p_\theta(\mathbf{e}_t|\mathbf{e}_{s,u}),p_\theta(\mathbf{e}_t|\mathbf{e}_{s,v}))]$. In this way, we may encourage the entities in the source graph to be aligned to different target-graph entities.

However, directly maximizing the mean divergence can be problematic. This is because the gradient of the alignment functions may explode when the mass of $p_\theta(\mathbf{e}_t|\mathbf{e}_{s,u})$ and $p_\theta(\mathbf{e}_t|\mathbf{e}_{s,v})$ concentrates in different areas (i.e., their KL divergence is very large).

Due to the problem, we instead seek to maximize a lower bound of the mean KL divergence, which is the mutual information between the aligned entities and source-graph entities:
\begin{equation}\nonumber
\begin{aligned}
\label{eqn::align}
	I(\mathbf{e}_s,\mathbf{e}_t)=\mathbb{E}_{p_\theta(\mathbf{e}_s,\mathbf{e}_t)}[\log \frac{p_\theta(\mathbf{e}_s,\mathbf{e}_t)}{p_d(\mathbf{e}_s) p_\theta(\mathbf{e}_t)}].
\end{aligned}
\end{equation}
We prove that the mutual information is a lower bound of the mean KL divergence in the following theorem.
\begin{theorem}
The mutual information $I(\mathbf{e}_s,\mathbf{e}_t)$ provides a lower bound of the mean KL divergence between the alignment distributions of two source-graph entities $\mathbb{E}_{\mathbf{e}_{s,u},\mathbf{e}_{s,v}\sim p_d(\mathbf{e}_s) }[\text{KL}(p_\theta(\mathbf{e}_t|\mathbf{e}_{s,u}),p_\theta(\mathbf{e}_t|\mathbf{e}_{s,v}))]$.
\end{theorem}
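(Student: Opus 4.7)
The plan is to rewrite the mutual information in its standard ``conditional KL'' form, and then apply Jensen's inequality using the fact that $\mathrm{KL}(p\|q)$ is convex in its second argument. Concretely, I would first observe that
$$
I(\mathbf{e}_s,\mathbf{e}_t)
= \mathbb{E}_{p_d(\mathbf{e}_s)}\bigl[\mathrm{KL}\!\bigl(p_\theta(\mathbf{e}_t\mid\mathbf{e}_s)\,\big\|\,p_\theta(\mathbf{e}_t)\bigr)\bigr],
$$
which follows by expanding $p_\theta(\mathbf{e}_s,\mathbf{e}_t)=p_d(\mathbf{e}_s)p_\theta(\mathbf{e}_t\mid\mathbf{e}_s)$ inside the log. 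This already puts the quantity into a form that is structurally close to the target: an average over one source entity of a KL against an ``averaged'' target-entity distribution.

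Next I would use the identity $p_\theta(\mathbf{e}_t)=\mathbb{E}_{p_d(\mathbf{e}_{s,v})}[p_\theta(\mathbf{e}_t\mid\mathbf{e}_{s,v})]$, so that the inner KL becomes $\mathrm{KL}\bigl(p_\theta(\mathbf{e}_t\mid\mathbf{e}_{s,u})\,\big\|\,\mathbb{E}_v[p_\theta(\mathbf{e}_t\mid\mathbf{e}_{s,v})]\bigr)$. The key analytic step is then Jensen's inequality applied to the map $q\mapsto\mathrm{KL}(p\|q)$, which is convex (this follows from the log-sum inequality, or equivalently from joint convexity of KL). This gives
$$
\mathrm{KL}\!\bigl(p_\theta(\mathbf{e}_t\mid\mathbf{e}_{s,u})\,\big\|\,\mathbb{E}_v[p_\theta(\mathbf{e}_t\mid\mathbf{e}_{s,v})]\bigr)
\le \mathbb{E}_{v\sim p_d}\!\bigl[\mathrm{KL}\!\bigl(p_\theta(\mathbf{e}_t\mid\mathbf{e}_{s,u})\,\big\|\,p_\theta(\mathbf{e}_t\mid\mathbf{e}_{s,v})\bigr)\bigr].
$$
Taking an outer expectation over $\mathbf{e}_{s,u}\sim p_d$ on both sides and combining with the rewriting of $I$ from the first step yields exactly
$$
I(\mathbf{e}_s,\mathbf{e}_t)\le \mathbb{E}_{\mathbf{e}_{s,u},\mathbf{e}_{s,v}\sim p_d}\!\bigl[\mathrm{KL}\!\bigl(p_\theta(\mathbf{e}_t\mid\mathbf{e}_{s,u})\,\big\|\,p_\theta(\mathbf{e}_t\mid\mathbf{e}_{s,v})\bigr)\bigr],
$$
which is the claim.

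I do not expect a serious obstacle here; the only step worth being careful about is justifying convexity of $\mathrm{KL}(p\|\cdot)$, which I would cite from standard information theory (or derive in one line from the log-sum inequality). A minor point is measurability and finiteness: the bound is vacuous when the right-hand side is infinite, and otherwise Fubini lets us freely swap the two expectations over $u$ and $v$, so no further regularity assumptions are needed beyond those implicit in the paper's setup.
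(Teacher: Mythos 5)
Your proposal is correct and is essentially the paper's own argument viewed from the other end: the paper expands the mean KL, writes $p_\theta(\mathbf{e}_t)=\mathbb{E}_{p_d(\mathbf{e}_{s,v})}[p_\theta(\mathbf{e}_t\mid\mathbf{e}_{s,v})]$, and applies Jensen's inequality to the $\log$ of that mixture, which is exactly your ``convexity of $\mathrm{KL}(p\|\cdot)$ in the second argument'' step unpacked. The only differences are presentational (you start from the identity $I=\mathbb{E}_{p_d(\mathbf{e}_s)}[\mathrm{KL}(p_\theta(\mathbf{e}_t\mid\mathbf{e}_s)\|p_\theta(\mathbf{e}_t))]$ and bound upward, the paper starts from the mean KL and bounds downward), so no substantive gap.
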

\begin{proof}
For the mean KL divergence, we have:
\begin{equation}\nonumber
\begin{aligned}
    &\mathbb{E}_{\mathbf{e}_{s,u},\mathbf{e}_{s,v}\sim p_d(\mathbf{e}_s) }[\text{KL}(p_\theta(\mathbf{e}_t|\mathbf{e}_{s,u})||p_\theta(\mathbf{e}_t|\mathbf{e}_{s,v}))]\\
    =&\mathbb{E}_{p_d(\mathbf{e}_{s,u})p_d(\mathbf{e}_{s,v})p_\theta(\mathbf{e}_t|\mathbf{e}_{s,u})}[\log p_\theta(\mathbf{e}_t|\mathbf{e}_{s,u})]\\
    &-\mathbb{E}_{p_d(\mathbf{e}_{s,u})p_d(\mathbf{e}_{s,v})p_\theta(\mathbf{e}_t|\mathbf{e}_{s,u})}[\log p_\theta(\mathbf{e}_t|\mathbf{e}_{s,v})] \\
\end{aligned}
\end{equation}
For the first term, it equals to \begin{small}$\mathbb{E}_{p_\theta(\mathbf{e}_t,\mathbf{e}_{s,u})}[\log p_\theta(\mathbf{e}_t|\mathbf{e}_{s,u})]$\end{small}.
For the second term, we have:
\begin{equation}\nonumber
\begin{aligned}
    &\mathbb{E}_{p_d(\mathbf{e}_{s,u})p_d(\mathbf{e}_{s,v})p_\theta(\mathbf{e}_t|\mathbf{e}_{s,u})}[\log p_\theta(\mathbf{e}_t|\mathbf{e}_{s,v})] \\
    =&\mathbb{E}_{p_\theta(\mathbf{e}_t,\mathbf{e}_{s,u})}[\mathbb{E}_{p_d(\mathbf{e}_{s,v})}[ \log p_\theta(\mathbf{e}_t|\mathbf{e}_{s,v})]] \\
    \leq &\mathbb{E}_{p_\theta(\mathbf{e}_t,\mathbf{e}_{s,u})}[\log  \mathbb{E}_{p_d(\mathbf{e}_{s,v})}[ p_\theta(\mathbf{e}_t|\mathbf{e}_{s,v})]] \\
    =&E_{p_\theta(\mathbf{e}_t,\mathbf{e}_{s,u})}[\log  p_\theta(\mathbf{e}_t,\mathbf{e}_{s,v})]
\end{aligned}
\end{equation}
Here, the inequation is based on the Jensen's inequality ($\log \mathbb{E}[f(x)] \geq \mathbb{E}[\log f(x)]$). By combing the above terms, we obtain:
\begin{equation}\nonumber
\begin{aligned}
    &\mathbb{E}_{\mathbf{e}_{s,u},\mathbf{e}_{s,v}\sim p_d(\mathbf{e}_s) }[\text{KL}(p_\theta(\mathbf{e}_t|\mathbf{e}_{s,u})||p_\theta(\mathbf{e}_t|\mathbf{e}_{s,v}))] 
    \\
    =&\mathbb{E}_{p_\theta(\mathbf{e}_t,\mathbf{e}_{s,u})}[\log p_\theta(\mathbf{e}_t|\mathbf{e}_{s,u})]-E_{p_\theta(\mathbf{e}_t,\mathbf{e}_{s,u})}[\log  p_\theta(\mathbf{e}_t,\mathbf{e}_{s,v})]
    \\
    \geq &\mathbb{E}_{p_d(\mathbf{e}_s)}[\text{KL}(p_\theta(\mathbf{e}_t|\mathbf{e}_s)||p_\theta(\mathbf{e}_t))]=I(\mathbf{e}_s,\mathbf{e}_t)
\end{aligned}
\end{equation}
The theorem is proved.
\end{proof}
With the above theorem, we see that by maximizing the mutual information between the aligned entities and source-graph entities, we can guarantee the mean KL divergence not to be so small, and therefore mitigate mode collapse.

Following recent studies on neural mutual information estimation~\cite{belghazi2018mine}, we calculate the mutual information by introducing a function $T_\gamma$ as follows:
\begin{equation}\nonumber
\begin{aligned}
I(\mathbf{e}_s,\mathbf{e}_t) \geq I_\gamma(\mathbf{e}_s,\mathbf{e}_t)&=\sup_{T_\gamma \in \mathbb{F}} \mathbb{E}_{p_\theta(\mathbf{e}_s,\mathbf{e}_t)}[T_\gamma(\mathbf{e}_s,\mathbf{e}_t)]\\ &- \log (\mathbb{E}_{p_d(\mathbf{e}_s)p_\theta(\mathbf{e}_t)}[e^{T_\gamma(\mathbf{e}_s,\mathbf{e}_t)}])
\end{aligned}
\end{equation}
Basically, $I_\gamma(\mathbf{e}_s,\mathbf{e}_t)$ is an estimation of $I(\mathbf{e}_s,\mathbf{e}_t)$, where we parameterize $T_\gamma(\mathbf{e}_s,\mathbf{e}_t)$ as a neural network, which takes the embeddings of $\mathbf{e}_s$ and $\mathbf{e}_t$ as input to output a scalar value. As we optimize $T_\gamma$, the above neural estimation will become more precise.

In most existing studies (\citet{hjelm2018learning};~\citet{velivckovic2018deep}), only the function $T_\gamma$ is optimized, since their end-goal is to improve representation learning by approximating the mutual information. By contrast, our end-goal is to improve the alignment function $p_\theta$ by maximizing the mutual information $I(\mathbf{e}_s,\mathbf{e}_t)$. Therefore, besides optimizing $T_\gamma$ to tighten the bound, we also optimize $p_\theta$ to push the bound up. Specifically, the gradient for $\theta$ can be calculated as follows:
\begin{equation}\nonumber
\begin{aligned}
\nabla_\theta I_\gamma &= \mathbb{E}_{p_\theta(\mathbf{e}_s,\mathbf{e}_t)}[T_\gamma \nabla_\theta \log p_\theta(\mathbf{e}_s,\mathbf{e}_t)] \\ &- \frac{\mathbb{E}_{p_d(\mathbf{e}_s)p_\theta(\mathbf{e}_t)}[e^{T_\gamma}\nabla_\theta \log p_\theta(\mathbf{e}_t)]}{\mathbb{E}_{p_d(\mathbf{e}_s)p_\theta(\mathbf{e}_t)}[e^{T_\gamma}]}
\end{aligned}
\end{equation}
Here, we again leverage the REINFORCE algorithm~\cite{williams1992simple} to for gradient calculation. In practice, the gradient can be approximated as follows:
\begin{equation}\nonumber
\begin{aligned}
\nabla_\theta I_\gamma &\simeq \sum_{i=1}^n\frac{T_\gamma(\mathbf{\tilde{e}}_s^{(i)}, \mathbf{\tilde{e}}_t^{(i)}) \nabla_\theta \log p_\theta(\mathbf{\tilde{e}}_t^{(i)}|\mathbf{\tilde{e}}_s^{(i)})}{n} \\ & - \frac{\sum_{i=1}^n e^{T_\gamma(\mathbf{\tilde{e}}_s^{(n+i)}, \mathbf{\tilde{e}}_t^{(i)})}\nabla_\theta \log p_\theta(\mathbf{\tilde{e}}_t^{(i)}|\mathbf{\tilde{e}}_s^{(i)})}{\sum_{i=1}^ne^{T_\gamma(\mathbf{\tilde{e}}_s^{(n+i)}, \mathbf{\tilde{e}}_t^{(i)})}}
\end{aligned}
\end{equation}
Here we have $\mathbf{\tilde{e}}_s^{(i)} \sim p_d(\mathbf{e}_s)$ for $i \in [1,2n]$, and $\mathbf{\tilde{e}}_t^{(i)} \sim p_\theta(\mathbf{\tilde{e}}_t^{(i)}|\mathbf{\tilde{e}}_s^{(i)})$ for $i \in [1,n]$.

\subsection{Weakly-supervised Learning}
\label{model::supervised}

The above sections introduce an unsupervised approach to knowledge graph alignment. In many cases, the source and target knowledge graphs may have very different structures (e.g., entity or triplet distributions), making our approach fail to perform effective alignment. In these cases, we can integrate our approach with a supervised method, and leverage a few labeled data (e.g., aligned entity or relation pairs) as guidance, which yields a weakly-supervised approach. 

\subsection{Optimization}
\label{model::optimization}

We leverage the stochastic gradient descent algorithm for optimization. In practice, we find that first pre-training the alignment functions with existing supervised approaches, then fine-tuning them with the triplet discriminator and the mutual information maximization strategy leads to impressive results. Consequently, we adopt the pre-training and fine-tuning framework for optimization, and the optimization algorithm is summarized in Alg.~\ref{alg:optim}.

\begin{algorithm}[tb]
    \caption{Optimization Algorithm}
    \label{alg:optim}
    \begin{algorithmic}
        \STATE {\bfseries Input:} Two knowledge graphs $\textbf{G}_s$ and $\textbf{G}_s$, some aligned entity/relation pairs (optional).
        \STATE {\bfseries Output:} The alignment functions $p_\theta$.
        \STATE Pre-train the alignment functions with the aligned pairs.
        \STATE Pre-train the triplet discriminator $D_\phi$.
        \STATE Pre-train the mutual information estimator $I_\gamma$.
        \WHILE{not converge}
            \STATE Update the triplet discriminator $D_\phi$.
            \STATE Update the alignment functions $p_\theta$ with $D_\phi$.
            \STATE Update the mutual information estimator $I_\gamma$.
            \STATE Update the alignment functions $p_\theta$ to maximize $I_\gamma$.
        \ENDWHILE
\end{algorithmic}
\end{algorithm}
\section{Experiment}

In this section, we empirically evaluate the performance of the proposed approach to knowledge graph alignment.

\subsection{Experiment Setup}

In experiment, we use four datasets for evaluation. In FB15k-1 and FB15k-2, the knowledge graphs have very different triplets, because they have been constructed from different sources. In WK15k(en-fr) and WK15k(en-de), the knowledge graphs are from different languages. The statistics are summarized in Table~\ref{tab::dataset}. Following existing studies (\citet{zhu2017iterative};~\citet{chen2017multilingual};~\citet{sun2018bootstrapping}), we consider the task of entity alignment, and three different settings are considered, including supervised, weakly-supervised and unsupervised settings. Hit ratio at different positions (H@k) and mean rank (MR) are reported.

\begin{table*}[!htb]
	\caption{Statistics of the Datasets.}
	\label{tab::dataset}
	\begin{center}
	\scalebox{0.75}
	{
		\begin{tabular}{c|c c|c c|c c|c c}\hline
		    \multirow{2}{*}{\textbf{Dataset}}	& \multicolumn{2}{c|}{\textbf{FB15k-1}} & \multicolumn{2}{c|}{\textbf{FB15k-2}} & \multicolumn{2}{c|}{\textbf{WK15k(en-fr)}} & \multicolumn{2}{c}{\textbf{WK15k(en-de)}} \\ \cline{2-9}
		    & src & tgt & src & tgt & en & fr & en & de \\ \hline \hline
	        \#Entities  & 14,951 & 14,951 & 14,951 & 14,951 & 15,169 & 15,392 & 15,125 & 14,602   \\
	        \#Relations  & 1,345 & 1,345 & 1,345 & 1,345 & 2,217 &  2,416 & 1,833 & 594   \\
	        \#Triplets  & 444,159 & 444,160 & 325,717 & 325,717 & 203,226 &  170,441 & 210,611 & 145,567    \\
	        \#Training Pairs & \multicolumn{2}{c|}{5,000} & \multicolumn{2}{c|}{500} & 3,874 (en$\rightarrow$fr) & 3,856 (fr$\rightarrow$en)   & 7,853 (en$\rightarrow$de) & 5,606 (de$\rightarrow$en) \\
	        \#Test Pairs & \multicolumn{2}{c|}{9,951} & \multicolumn{2}{c|}{14,451} & 2,550 (en$\rightarrow$fr) & 2,496 (fr$\rightarrow$en) & 1,139 (en$\rightarrow$de) & 1,283 (de$\rightarrow$en)   \\
	        \hline
	    \end{tabular}
	}
	\end{center}
\end{table*}

\smallskip
\noindent \textbf{5.1.1. Datasets}

\begin{itemize}[leftmargin=*,noitemsep,nolistsep]
    \item \textbf{FB15k-1, FB15k-2}: Following~\cite{zhu2017iterative}, we construct two datasets from the FB15k dataset~\cite{bordes2013translating}. In FB15k-1, the two knowledge graphs share 50\% triplets, and in FB15k-2 10\% triplets are shared. According to the study, we use 5000 and 500 aligned entity pairs as labeled data in FB15k-1 and FB15k-2 respectively, and the rest for evaluation.
    \item \textbf{WK15k(en-fr)}: A bi-lingual (English and French) dataset in~\cite{chen2017multilingual}. Some aligned triplets are provided as labeled data, and some aligned entity pairs as test data. The labeled data and test data have some overlaps, so we delete the overlapped pairs from labeled data. Also, some entities in the test set are not included in the training set, and thus we filter out those entities.
    \item \textbf{WK15k(en-de)}: A bi-lingual (English and German) dataset used in~\cite{chen2017multilingual}. The dataset is similar to WK15k(en-fr), so we perform preprocessing in the same way.
\end{itemize}

\smallskip
\noindent \textbf{5.1.2. Compared Algorithms}

(1) \textbf{iTransE}~\cite{zhu2017iterative}: A supervised method for knowledge graph alignment.
(2) \textbf{MLKGA}~\cite{chen2017multilingual}: A supervised method for multi-lingual knowledge graph alignment.
(3) \textbf{AlignE}~\cite{sun2018bootstrapping}: A supervised method for knowledge graph alignment, which leverages a bootstrapping manner for training.
(4) \textbf{BootEA}~\cite{sun2018bootstrapping}: Another bootstrapping method for knowledge graph alignment.
(5) \textbf{Procrustes}~\cite{artetxe2017learning}: A supervised method for word translation, which learns the translation in a bootstrapping way. We apply the method on the pre-trained entity and relation embeddings to perform knowledge graph alignment.
(6) \textbf{UWT}~\cite{conneau2017word}: An unsupervised word translation method, which leverages adversarial training and a refinement strategy. We apply the method to the entity and relation embeddings to perform alignment.
(7) \textbf{\method{}}: Our proposed approach, which leverages both the triplet discriminator and the mutual information maximization strategy for training.

\smallskip
\noindent \textbf{5.1.3. Parameter Settings}

For all datasets, 10\% labeled pairs are treated as the validation set, which is used for hyper-parameter selection for each compared algorithm. For the dimension of the entity embedding, we choose the optimal value from $\{64, 128, 256, 512\}$ based on the performance on the validation set. For our proposed approach, the entity and relation embeddings are trained with the TransE~\cite{bordes2013translating} algorithm by default, due to its simplicity and effectiveness. The alignment functions are pre-trained with the Procrustes~\cite{artetxe2017learning} algorithm in the weakly-supervised and supervised settings, because Procrustes is both effective and efficient. For the potential functions $f_\phi$ and $g_\phi$ in the discriminator, and the T function $T_\gamma$ in the neural estimator of mutual information, we build each of them using a two-layer neural network with 2048 hidden units and the LeakyReLU activation function~\cite{maas2013rectifier}. SGD is used for optimization. The learning rates for the triplet discriminator and the mutual information estimator are set as 0.1 during pre-training, and 0.001 during training. The learning rate for the alignment functions is set as 0.001. Early stopping is used during training.

\subsection{Experiment Results}

\begin{table*}[bht]
	\caption{Quantitative Results of Entity Alignment.}
	\label{tab::results}
	\begin{center}
	\scalebox{0.65}
	{
		\begin{tabular}{c|c|c c c|c c c|c c c|c c c|c c c|c c c}\hline
		    \multirow{2}{*}{\textbf{Setting}}	& \multirow{2}{*}{\textbf{Algorithm}}	& \multicolumn{3}{c|}{\textbf{FB15k-1}} & \multicolumn{3}{c|}{\textbf{FB15k-2}} & \multicolumn{3}{c|}{\textbf{WK15k fr2en}} & \multicolumn{3}{c|}{\textbf{WK15k en2fr}} & \multicolumn{3}{c|}{\textbf{WK15k de2en}} & \multicolumn{3}{c}{\textbf{WK15k en2de}} \\ \cline{3-20}
		    & & \small{H@1} & \small{H@10} & \small{MR} & \small{H@1} & \small{H@10} & \small{MR} & \small{H@1} & \small{H@10} & \small{MR} & \small{Hit@1} & \small{H@10} & \small{MRR} & \small{H@1} & \small{H@10} & \small{MR} & \small{H@1} & \small{H@10} & \small{MR} \\ \hline \hline
		    
		    \multirow{2}{*}{\small{\textbf{Unsupervised}}} & UWT  & 79.33 & 91.48 & 18.6 & 70.03 & 86.86 & 29.6 & 0.66 & 2.97 & 6099.0 & 0.03 & 0.46 & 6091.0 & 0.44 & 1.55 & 5910.3 & 0.55 & 3.06 & 2982.5 \\
            & \method{} & \textbf{83.41} & \textbf{92.63} & \textbf{10.5} & \textbf{73.68} & \textbf{88.91} & \textbf{26.3} & \textbf{1.22} & \textbf{4.59} & \textbf{5798.9} & \textbf{0.24} & \textbf{1.32} & \textbf{5696.0} & \textbf{0.61} & \textbf{2.37} & \textbf{2939.2}  & \textbf{0.78} & \textbf{4.99} & \textbf{2134.9} \\ \hline
            
		    \multirow{6}{*}{\small{\textbf{Supervised}}} & iTransE & 64.58 & 80.87 & 47.0 & 9.69 & 29.23 & 760.7 & 0.94 & 12.59 & 3192.1 & 0.64 & 13.94 & 2922.3 & 5.36 & 12.55 & 4048.2 & 8.11 & 16.13 & 1803.2  \\
		    & MLKGA   & 78.87 & 90.66 & 24.3 & 53.60 & 78.80 & 66.6 & 26.63 & 62.43 & 176.0 & 26.20 & 62.74 & 193.6 & 60.40 & 81.30 & 93.2 & 46.92 & 72.80 & 113.9  \\
		    & AlignE  & 57.94 & 77.51 & 63.9 & 17.76 & 43.40 & 223.0 & 15.29 & 46.12 & 523.0 & 9.98 & 37.98 & 429.1 & 26.08 & 43.63 & 300.0 & 19.02 & 40.14 & 408.2  \\
		    & BootEA  & 74.98 & 88.25 & 21.8 & 20.05 & 46.29 & 216.6 & 32.30 &  60.59 & 392.9 & 31.45 & 56.97 & 317.2 & 41.00 & 58.74 & 195.9 & 35.23 & 55.73 & 334.8  \\
		    & Procrustes & 82.36 & 92.13 & 15.4 & 72.08 & 87.15 & 28.3 & 32.24 & 67.37 & 139.3 & 30.97 & 64.58 & 173.8 & 64.44 & 83.76 & 89.0 & 48.17 & 73.97 & 113.8 \\
            & \method{} & \textbf{84.76} & \textbf{93.68} & \textbf{9.9} & \textbf{73.73} & \textbf{88.80} & \textbf{24.8} & \textbf{35.88} & \textbf{68.59} & \textbf{136.3} & \textbf{35.54} & \textbf{68.23} & \textbf{165.4} & \textbf{67.55} & \textbf{85.07} & \textbf{68.9} & \textbf{51.13} & \textbf{74.43} & \textbf{106.9} \\ \hline
		    
	    \end{tabular}
	}
	\end{center}
\end{table*}

\smallskip
\noindent \textbf{5.2.1. Comparison with Baseline Methods}

The main results are presented in Table~\ref{tab::results}. In the supervised setting, our approach significantly outperforms all the compared methods, showing our approach can utilize the labeled data more effectively. In the unsupervised setting on FB15k datasets, without using any labeled data, our approach already achieves close results as in supervised settings. 

However, the performance on the WK15k datasets in the unsupervised is quite poor. The possible reason is that the source and target knowledge graphs in WK15k datasets have very different structures (i.e., entity distribution and triplet distribution). Therefore, the triplet discriminator cannot well discriminate between the real and fake triplets, and further provides effective reward. In such cases, we may leverage a few aligned entity pairs to pre-train our alignment functions, leading to a weakly-supervised approach. We study the performance of this weakly-supervised approach in Figure~\ref{fig::weak-curve}. The Procrustes algorithm is chosen as the compared method, since it has the best performance in the supervised setting. From the results, we see that by using a very small number of aligned pairs, our approach (blue line) already outperforms Procrustes in the supervised setting (black line), showing that our approach is also quite effective in the weakly-supervised setting.

\begin{figure*}[htb!]
	\centering
	\subfigure[WK15k fr2en]{
		\label{fig::weak-fr2en}
		\includegraphics[width=0.22\textwidth]{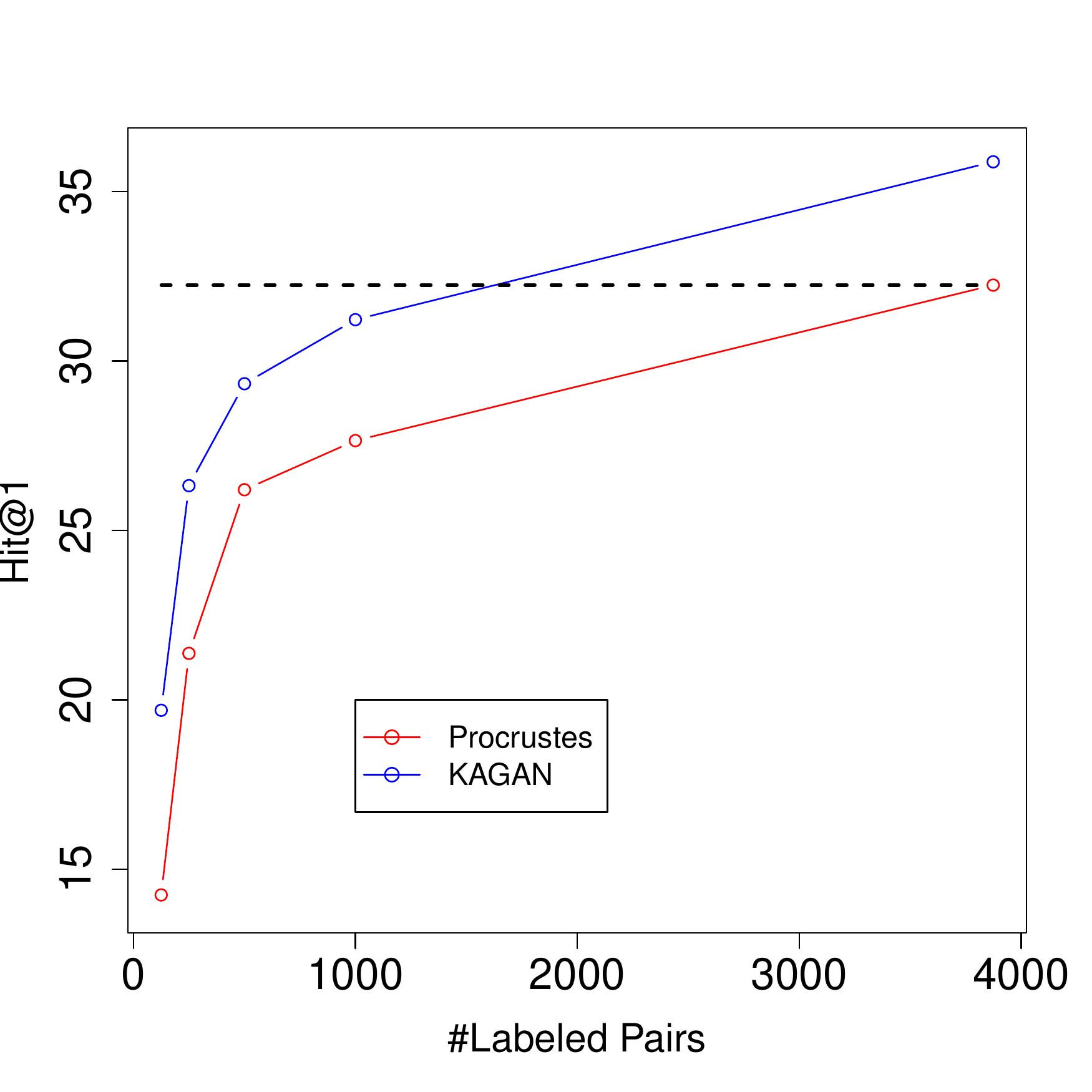}
	}
	\subfigure[WK15k en2fr]{
		\label{fig::weak-en2fr}
		\includegraphics[width=0.22\textwidth]{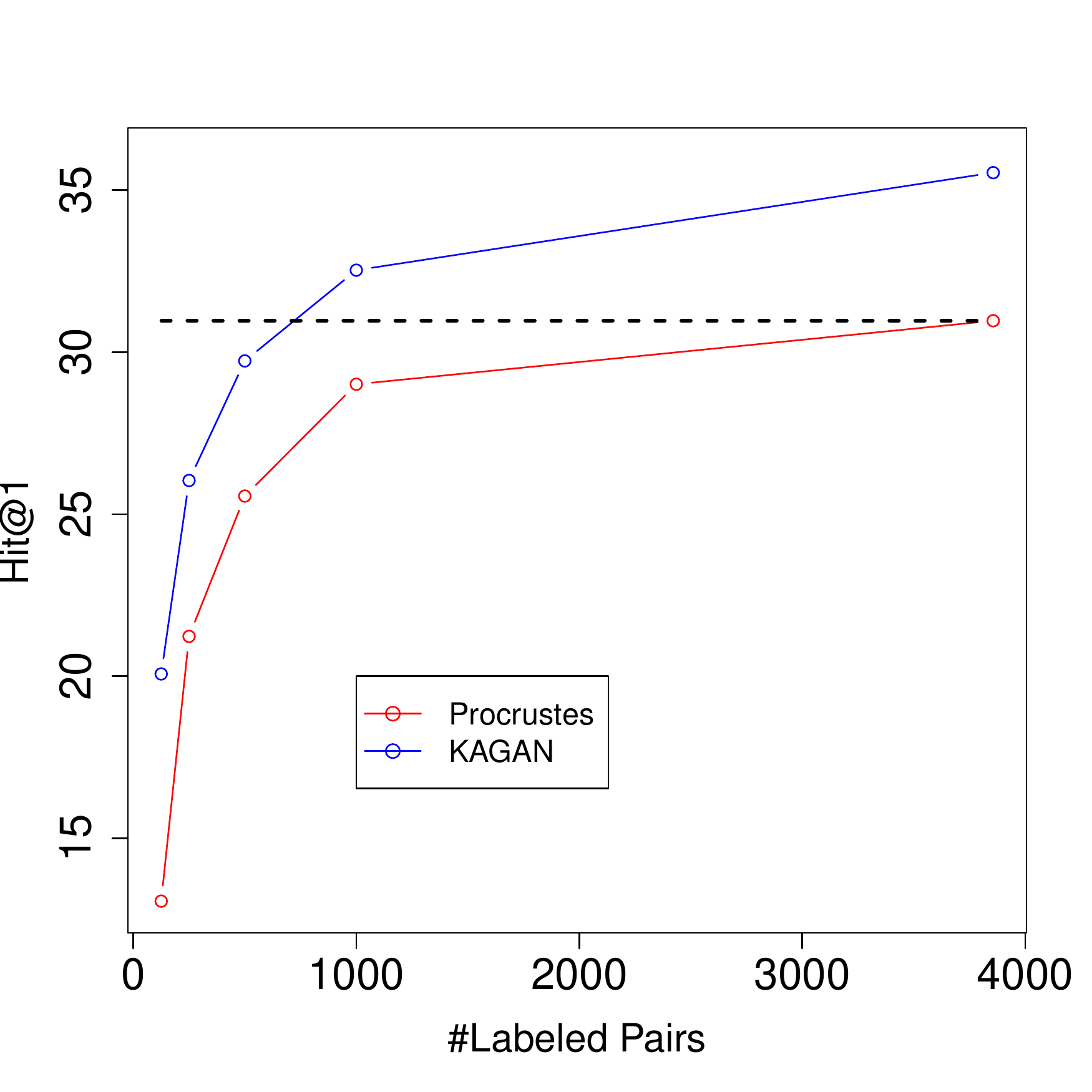}
	}
	\subfigure[WK15k de2en]{
		\label{fig::weak-de2en}
		\includegraphics[width=0.22\textwidth]{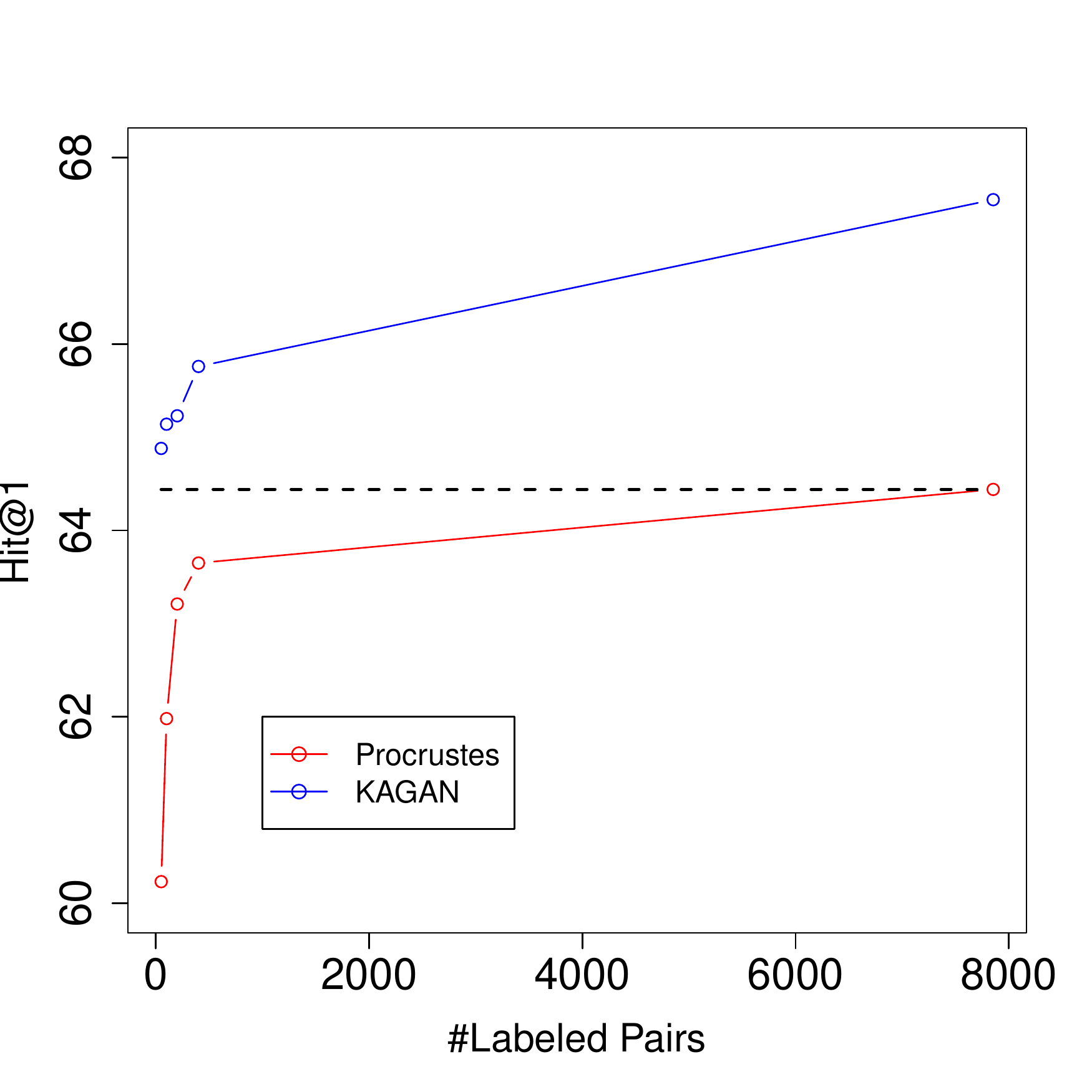}	
	}
	\subfigure[WK15k en2de]{
		\label{fig::weak-en2de}
		\includegraphics[width=0.22\textwidth]{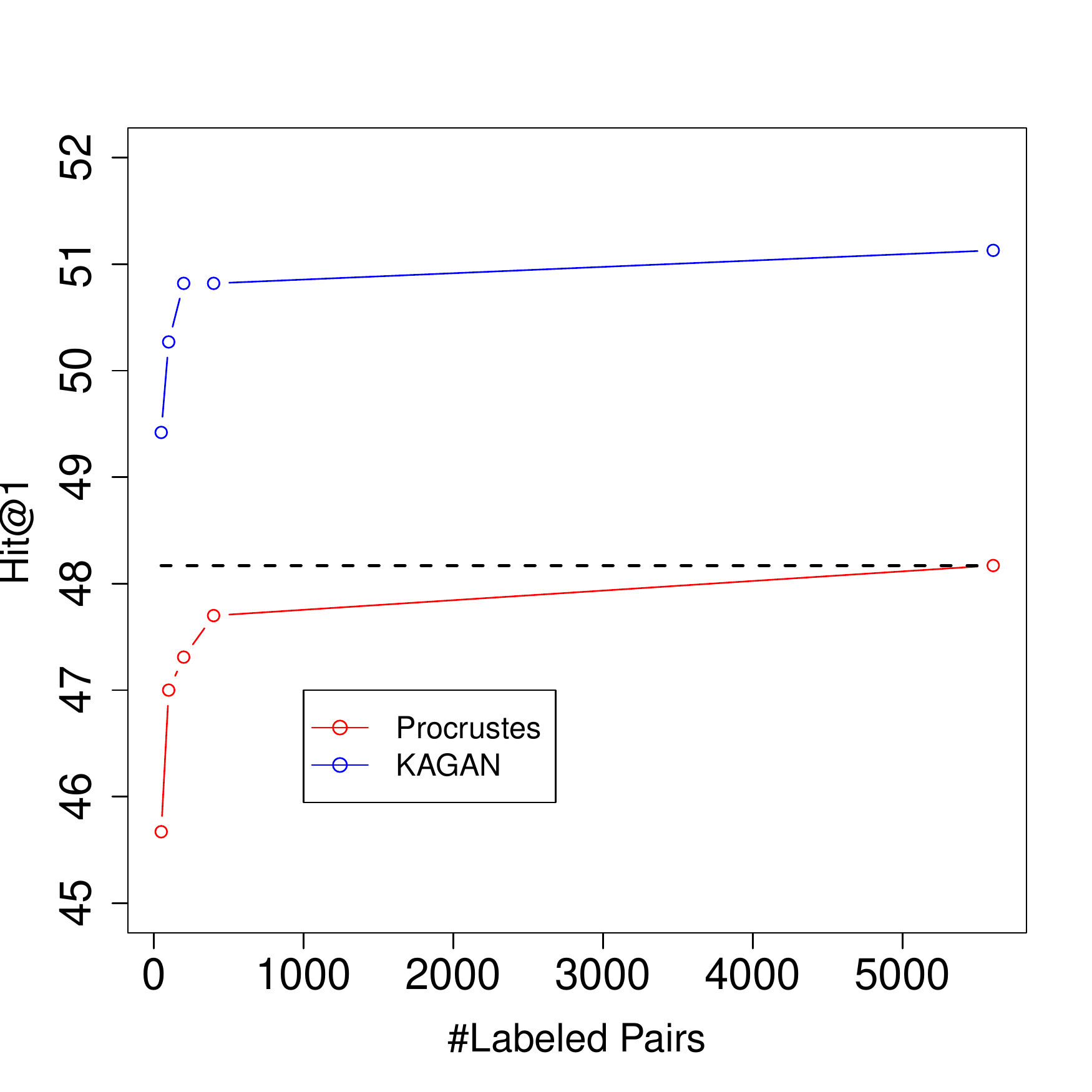}
	}
	\caption{Performance in the weakly-supervised setting.}
	\label{fig::weak-curve}
\end{figure*}

\smallskip
\noindent \textbf{5.2.2. Analysis of Mutual Information Maximization}

\begin{table}[bht]
	\caption{Analysis of Mutual Information Maximization.}
	\label{tab::results-mi}
	\begin{center}
	\scalebox{0.65}
	{
		\begin{tabular}{c|c c c|c c c|c c c}\hline
		    \multirow{2}{*}{\textbf{Method}}	& \multicolumn{3}{c|}{\textbf{FB15k-1}} & \multicolumn{3}{c|}{\textbf{WK15k de2en}} & \multicolumn{3}{c}{\textbf{WK15k en2de}} \\ \cline{2-10}
		    & \small{H@1} & \small{H@10} & \small{MR} & \small{H@1} & \small{H@10} & \small{MR} & \small{H@1} & \small{H@10} & \small{MR} \\ \hline 
		    w/o MI & 83.84 & 92.60 & 11.5 & 66.55 & 84.72 & 83.0 & 50.43 & 74.12 & 111.7 \\
		    with MI & \textbf{84.76} & \textbf{93.68} & \textbf{9.9} & \textbf{67.55} & \textbf{85.07} & \textbf{68.9} & \textbf{51.13} & \textbf{74.43} & \textbf{106.9} \\
		    \hline
		    
	    \end{tabular}
	}
	\end{center}
\end{table}

\begin{figure}[htb!]
	\centering
	\subfigure[WK15k de2en]{
		\label{fig::mi-de2en}
		\includegraphics[width=0.22\textwidth]{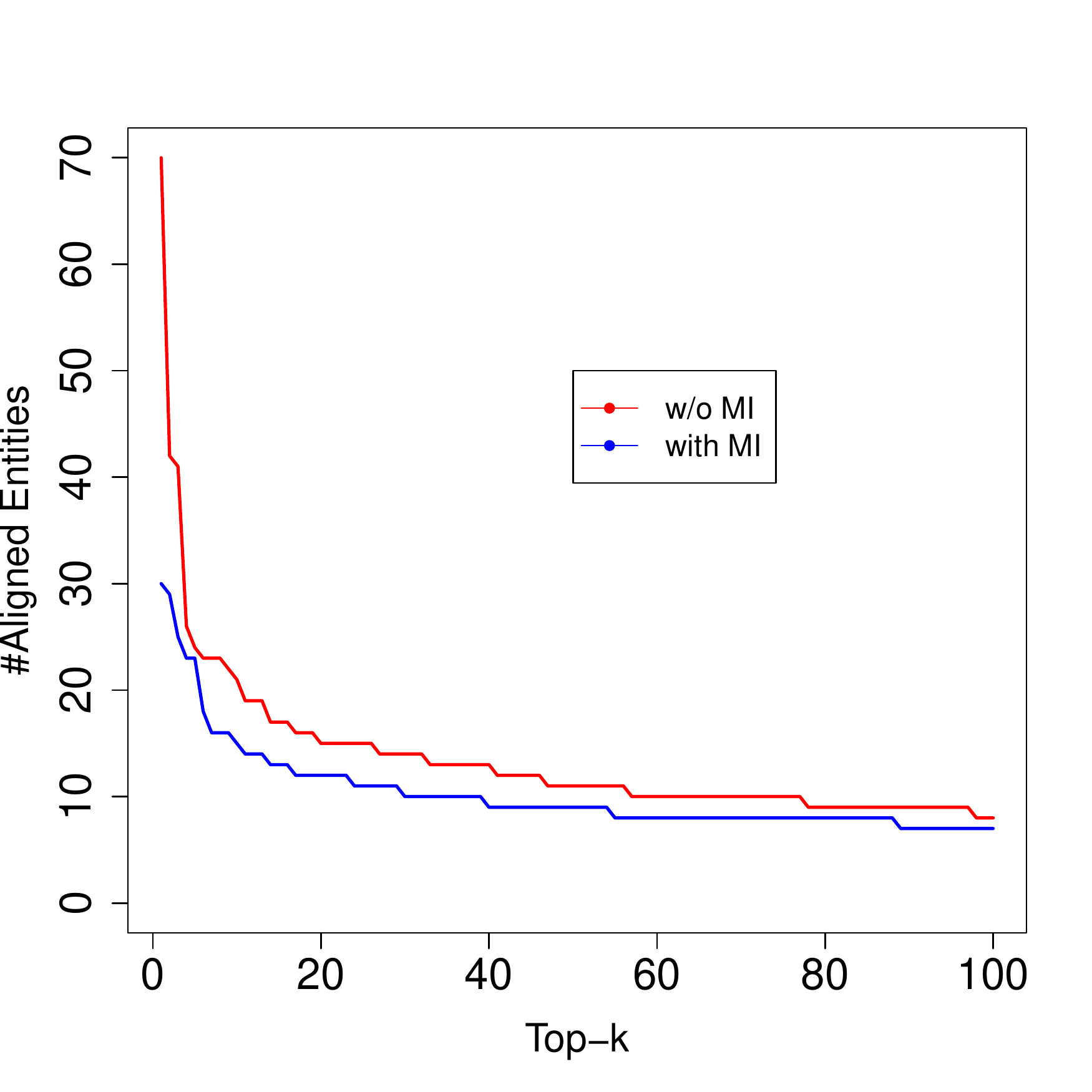}
	}
	\subfigure[WK15k en2de]{
		\label{fig::mi-en2de}
		\includegraphics[width=0.22\textwidth]{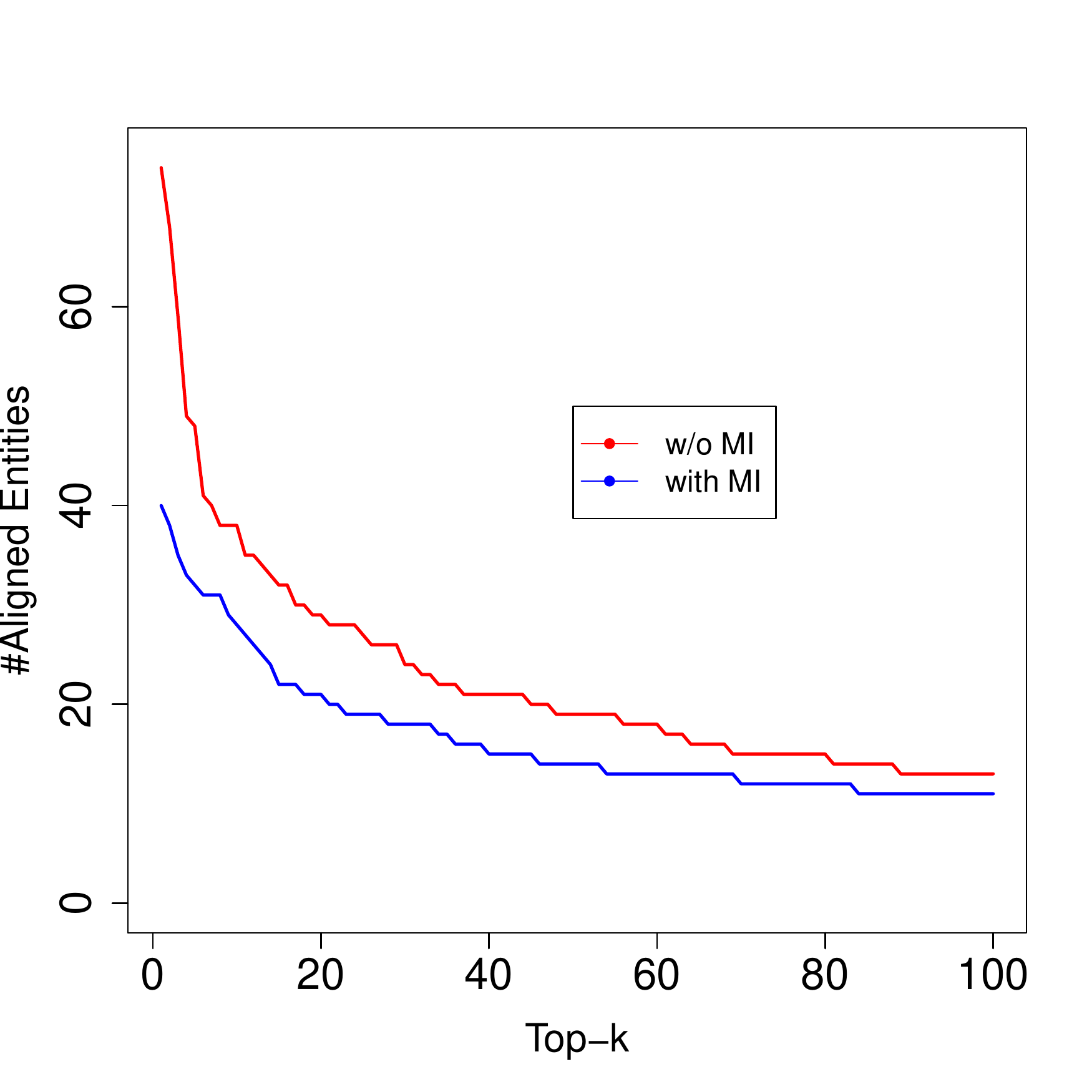}
	}
	\caption{Case study of mutual information maximization.}
	\label{fig::mi}
\end{figure}

In our approach, we avoid mode collapse by maximizing the mutual information between the source-graph entities and the aligned entities. To better understand the effect of this strategy, we conduct some ablation studies and case studies in the supervised setting.

Table~\ref{tab::results-mi} presents the results of the ablation study. By maximizing the mutual information, we consistently obtain superior results, which proves the effectiveness of such a strategy. Moreover, we show some case study results on the WK15k datasets in Fig.~\ref{fig::mi}. For each entity in the target knowledge graph, we count how many source-graph entities are aligned to that entity. Then we find top 100 target-graph entities with the largest counts, and their counts are reported. From the figure, we see that by maximizing the mutual information, the alignment counts of the top-ranked entities become smaller, which proves that our method can indeed encourage different source-graph entities to be aligned to different target-graph entities, and thus alleviate mode collapse.

\smallskip
\noindent \textbf{5.2.3. Analysis of the Discriminator Training}

\begin{table}[bht]
	\caption{Analysis of the Discriminator Training.}
	\label{tab::results-distrain}
	\begin{center}
	\scalebox{0.65}
	{
		\begin{tabular}{c|c c c|c c c|c c c}\hline
		    \multirow{2}{*}{\textbf{Method}}	& \multicolumn{3}{c|}{\textbf{FB15k-2}} & \multicolumn{3}{c|}{\textbf{WK15k fr2en}} & \multicolumn{3}{c}{\textbf{WK15k en2fr}} \\ \cline{2-10}
		    & \small{H@1} & \small{H@10} & \small{MR} & \small{H@1} & \small{H@10} & \small{MR} & \small{H@1} & \small{H@10} & \small{MR} \\ \hline 
		    Rand. & 68.72 & 81.34 & 37.8 & 32.90 & 67.22 & 178.9 & 30.29 & 67.63 & 202.1 \\
		    Rand.+Adv. & 72.72 & 88.34 & 28.0 & 33.88 & 66.86 & 166.2 & 31.93 & 66.79 & 184.4 \\
		    Adv. & \textbf{73.68} & \textbf{88.91} & \textbf{26.3} & \textbf{35.88} & \textbf{68.59} & \textbf{136.3} & \textbf{35.54} & \textbf{68.23} & \textbf{165.4} \\
		    \hline
		    
	    \end{tabular}
	}
	\end{center}
\end{table}

In our approach, a discriminator is trained to discriminate between the real and fake triplets, then we construct a reward function based on the discriminator. During discriminator training, we choose the triplets generated by our alignment models as fake triplets by default, and there are also some other ways to generate the fake triplets. In this section, we look into the problem, and compare different options of the fake triplets. Our default method, which treats the generated triplets as fake ones, is denoted as ``adv.''. Besides, another common choice is to use some random triplets as fake ones, as used in many knowledge graph embedding algorithms (\citet{bordes2013translating};~\citet{wang2014knowledge};~\citet{yang2014embedding}). We denote this variant as ``rand.''. Besides, we may also leverage both the random and the generated triplets as fake ones for discriminator training, and such a method is denoted as ``rand.+adv.''.

We compare the three variants on the FB15k-2 dataset (unsupervised setting) and the WK15k datasets (supervised setting), and the results are presented in Table~\ref{tab::results-distrain}. We see that using random triplets as fake ones (``rand.'' and ``rand.+adv.'') leading to inferior results compared with using only generated triplets, which proves the effectiveness of our adversarial training framework.

\smallskip
\noindent \textbf{5.2.4. Comparison of Knowledge Graph Embeddings}
\label{sec::embedding}

\begin{table}[bht]
	\caption{Comparison of Knowledge Graph Embeddings.}
	\label{tab::results-kge}
	\begin{center}
	\scalebox{0.65}
	{
		\begin{tabular}{c|c c c|c c c|c c c}\hline
		    \multirow{2}{*}{\textbf{Method}}	& \multicolumn{3}{c|}{\textbf{FB15k-2}} & \multicolumn{3}{c|}{\textbf{WK15k fr2en}} & \multicolumn{3}{c}{\textbf{WK15k en2fr}} \\ \cline{2-10}
		    & \small{H@1} & \small{H@10} & \small{MR} & \small{H@1} & \small{H@10} & \small{MR} & \small{H@1} & \small{H@10} & \small{MR} \\ \hline 
		    TransE & \textbf{73.68} & \textbf{88.91} & \textbf{26.3} & \textbf{35.88} & \textbf{68.59} & \textbf{136.3} & \textbf{35.54} & \textbf{68.23} & \textbf{165.4} \\
		    TransH & 34.39 & 47.99 & 464.2 & 12.04 & 25.41 & 1493.6 & 12.58 & 28.00 & 2278.5 \\
		    DistMult & 0.15 & 0.31 & 5351.2 & 0.12 & 0.20 & 5754.8 & 1.32 & 6.17 & 5625.8 \\
		    \hline
		    
	    \end{tabular}
	}
	\end{center}
\end{table}

In our approach, we pre-train the entity and relation embeddings by using some existing knowledge graph embedding algorithms, and then use these embeddings as features for training the alignment functions. Our approach is compatible with a wide range of knowledge graph emebedding algorithms, and in this section, we compare the performance of different knowledge graph embedding algorithms. We choose three commonly-used embedding algorithms for comparison, including TransE~\cite{bordes2013translating}, TransH~\cite{wang2014knowledge} and DistMult~\cite{yang2014embedding}.

The results on the FB15k-2 dataset (unsupervised setting) and the WK15k dataset (supervised setting) are presented in Table~\ref{tab::results-kge}. We see that TransE achieves the best performance among all three algorithms. The possible reason is that TransE uses a linear scoring function during training, and therefore the relations of entities are characterized as linear translations in the learned embedding space. Such information encoded in the learned embeddings can be effectively recovered by a linear alignment function, as used in our approach. By contrast, TransH and DistMult use more complicated scoring functions, and the information encoded in the learned embeddings cannot be well recovered by a simple linear alignment function, so the performance is much worse. In the future, we plan to explore some nonlinear alignment functions to further improve the performance.

\smallskip
\noindent \textbf{5.2.5. Comparison of Reward Functions}

\begin{table}[bht]
	\caption{Analysis of Reward Functions.}
	\label{tab::reward}
	\begin{center}
	\scalebox{0.8}
	{
		\begin{tabular}{c|c c c|c c c}\hline
		    \multirow{2}{*}{\textbf{Method}} & \multicolumn{3}{c|}{\textbf{WK15k fr2en}} & \multicolumn{3}{c}{\textbf{WK15k en2fr}} \\ \cline{2-7}
		    & \small{H@1} & \small{H@10} & \small{MR} & \small{H@1} & \small{H@10} & \small{MR} \\ \hline
		    w/o reward & 32.24 & 67.37 & 139.3 & 30.97 & 64.58 & 173.8 \\
		    $\log x$ & 35.25 & 67.10 & 149.2 & 34.42 & 67.63 & \textbf{164.1} \\
		    $\log \frac{x}{1-x}$ & 35.37 & 67.76 & 148.9 & 34.78 & 66.79 & 167.4 \\
		    $\frac{x}{1-x}$ & \textbf{36.00} & 68.27 & 138.9 & \textbf{35.62} & 66.55 & 185.4 \\
		    $x$ & 35.88 & \textbf{68.59} & \textbf{136.3} & 35.54 & \textbf{68.23} & 165.4 \\
		    \hline
		    
	    \end{tabular}
	}
	\end{center}
\end{table}

In our approach, we can choose different reward functions, leading to different adversarial training frameworks. These frameworks have the same optimal solutions, but with different variance. In this part, we compare them on the WK15k datasets in the supervised setting, and the results are presented in Table~\ref{tab::reward}. We notice that all reward functions lead to significant improvement compared with using no reward. Among them, $\frac{x}{1-x}$ and $x$ obtain relatively better results.

\section{Conclusion}

This paper studies knowledge graph alignment. We propose an unsupervised approach based on adversarial training and mutual information maximization, which is able to align entities and relations from a source knowledge graph to those in a target knowledge graph. Our approach can also be seamlessly integrated with existing supervised methods, leading to a weakly-supervised approach. Experimental results on several real-world datasets prove the effectiveness of our approach in both the unsupervised and weakly-supervised settings. In the future, we plan to learn alignment functions from two directions (source to target and target to source) to further improve the results, which is similar to CycleGAN~\cite{zhu2017unpaired}.





\bibliography{paper}
\bibliographystyle{icml2018}


%



\end{document}